\DeclarePairedDelimiter{\floor}{\lfloor}{\rfloor}
\newcolumntype{R}{>{\raggedleft\arraybackslash}X}
\theoremstyle{definition}
\newtheorem{definition}{Definition}
\newtheorem{theorem}{Theorem}
\newtheorem*{example*}{Example}
\newtheoremstyle{break}
  {\topsep}{\topsep}%
  {}{}%
  {\bfseries}{}%
  {\newline}{\thmname{#1}\thmnumber{ #2}: \thmnote{#3}}%
\theoremstyle{break}
\newtheorem{strategy}{Strategy}
\newcommand{\Aff}{\ensuremath{a}\xspace}
\newcommand{\AP}{\ensuremath{\mathrm{AP}}\xspace}
\newcommand{\gap}{\textsc{GAP}\xspace}
\newcommand{\C}{\ensuremath{\mathcal{C}^k_j}\xspace}
\newcommand{\Cs}{\ensuremath{|\C|}\xspace}
\newcommand{\A}{\ensuremath{\mathcal{A}^k}\xspace}
\newcommand{\T}{\ensuremath{\mathcal{T}^k}\xspace}
\newcommand{\OS}{OS/\(\gamma\)\xspace}
\newcommand{\iina}{\ensuremath{i \in \A}}
\newcommand{\jint}{\ensuremath{j \in \T}}
\newcommand{\secref}[1]{Section~\ref{#1}\xspace}
\newcommand{\tabref}[1]{Table~\ref{#1}\xspace}
\newcommand{\figref}[1]{Figure~\ref{#1}\xspace}
\newcommand{\myeqref}[1]{(Equation~\ref{#1})\xspace}
\newcommand{\authorcite}[1]{\citeauthor{#1}~\shortcite{#1}\xspace}
\begin{document}

\title{Multi-Cycle Assignment Problems with Rotational Diversity}

\author{\name Helge Spieker \email helge@simula.no \\
       \name Arnaud Gotlieb \email arnaud@simula.no \\
       \addr Simula Research Laboratory,\\
       Lysaker, Norway
       \AND
       \name Morten Mossige \email morten.mossige@uis.no \\
       \addr University of Stavanger, Stavanger, Norway\\
       ABB Robotics, Bryne, Norway}

\author{Helge Spieker$^1$ \and Arnaud Gotlieb$^1$ \and Morten Mossige$^{2,3}$}
\date{%
	$^1$ Simula Research Laboratory, Lysaker, Norway\\%
    $^2$ University of Stavanger, Stavanger, Norway\\%
    $^3$ ABB Robotics, Bryne, Norway\\%
    \{helge,arnaud\}@simula.no, morten.mossige@uis.no\\[2ex]%
}

\maketitle

\begin{abstract}
    Multi-cycle assignment problems address scenarios where a series of general assignment problems has to be solved sequentially. 
    Subsequent cycles can differ from previous ones due to changing availability or creation of tasks and agents, which makes an upfront static schedule infeasible and introduces uncertainty in the task-agent assignment process.
    We consider the setting where, besides profit maximization, it is also desired to maintain diverse assignments for tasks and agents, such that all tasks have been assigned to all agents over subsequent cycles.
    This problem of multi-cycle assignment with rotational diversity is approached in two sub-problems: The outer problem which augments the original profit maximization objective with additional information about the state of rotational diversity while the inner problem solves the adjusted general assignment problem in a single execution of the model.
    We discuss strategies to augment the profit values and evaluate them experimentally.
    The method's efficacy is shown in three case studies: multi-cycle variants of the multiple knapsack and the multiple subset sum problems, and a real-world case study on the test case selection and assignment problem from the software engineering domain.
\end{abstract}

\section{Introduction}


General assignment problems are well-studied in
artificial intelligence and can be solved efficiently. 
Their goal is to assign a set of weighted tasks to a set of agents,
such that capacity constraints are satisfied and a profit function is maximized.
These problems are relevant in a broad context, of which many consider some form of rotation.
In aircraft rotation~\cite{Clarke1996} or machine scheduling~\cite{Ma2010},
rotation mechanisms allow to keep maintenance schedules or optimize usage
patterns of machinery.
In nurse rostering~\cite{Chiaramonte2008,Azizi2010} and workforce scheduling~\cite{Ernst2004,Musliu2018},
rotation is relevant to avoid boredom, fatigue and prolonged high workloads or to cover a constrained shift system.
It is not always possible to address such scheduling requirements upfront,
albeit for personnel availability due to vacations and
sickness leaves, changing demand patterns, or short-term planning horizons for other reasons.
Conclusively, it can be necessary to include rotation mechanisms in scenarios of iterative and recurring planning due to problem constraints and requirements.

This paper addresses multi-cycle assignment problems, where there is uncertainty regarding the availability of tasks and agents, under the additional goal to rotate assignments from tasks to agents over successive cycles.
Tasks and agents can be unavailable for one or several cycles without previous
notice or information about their next availability.
We refer to the subsequent diverse assignments as \emph{rotational diversity}.
A full example of the problem and our solution is given in Section~\ref{sec:affinity}.

It should be noted, that this work defines \textit{rotational diversity} in a temporal manner.
That is, the solution to multiple subsequent instances of a
problem has to differ in the assignments made.
This is different from the notion of solution diversity,
where it is desirable to find multiple distinctly diverse solutions to one
instance of a problem \cite{Glover2000,Hebrard2005,Trapp2015,Petit2015}.

We develop an method, that combines profits and affinities,
a metric to describe the state of rotation, into a single optimization criterion.
Solving this model incrementally, that is, at each cycle,
allows to control rotational diversity.
A central component for this control is the strategy, that defines how profits
and affinities are combined.
Processwise, the method is split into two sub-problems: 
The outer problem which augments the original profit maximization objective with additional information about the state of rotational diversity, while the inner problem solves the adjusted general assignment problem in a single execution of the solver for the assignment problem.

Part of the technical contribution is the presentation of five strategies for this combination of values.
All of these strategies can be further combined with a \textit{Limited Assignment} extension, that restricts the possible assignments between tasks and agents with the goal to more rigorously enforce the solver to produce diverse solutions. 
Using Limited Assignment increases the ability to maintain rotational diversity, but with a trade-off in profit.

As part of the experimental evaluation, three case studies are considered.
The first case study is a multi-cycle extension of the multiple knapsack problem,
and in the second, the multiple subset sum problem is considered in a multi-cycle environment (MCMSSP).
The third case study is a real-world case study of test case selection and assignment problem (TCSA), originating from the software engineering domain.
Our results show that in all case studies rotational diversity can be effectively maintained by the introduced method, while sacrificing only a small percentage of the original goal of profit maximization, e.g. less than 4\,\% in TCSA.

In previous work \cite{Spieker2019}, we introduced the problem of rotational diversity in multi-cycle assignment problems and presented initial strategies to address the problem for the first time.
This paper builds upon the existing results with additional insights and explanations, as well as an extended experimental evaluation.
We furthermore introduce the Limited Assignment, which enforces diversity through manipulating the compatibility between tasks and agents.
This extended strategy can be combined with any of the previous approaches and shows to be effective to further improve the rotational diversity in our experiments.

The remainder of this paper is structured as follows:
\secref{sec:relwork} gives an overview on the related work in the area of general assignment problems, assignments under uncertainty and alternative approaches to our method, then \secref{sec:problem} introduces and formalizes the problem of multi-cycle assignment with rotational diversity.
Our method to approach rotational diversity is presented in \secref{sec:approach} along with the six evaluated strategies.
In \secref{sec:experiments}, we perform an experimental evaluation on two case studies before concluding the paper with a final discussion in \secref{sec:conclusion}.

\section{Related Work}
\label{sec:relwork}

The general multi-cycle assignment problem is a variant of the
\emph{General Assignment Problem}~(GAP)~\cite{Ross1975,Martello1990,Pentico2007}. 
A set of tasks, each
associated with a profit and a weight, has to be assigned to a set of agents
with limited capacity. The goal is to maximize (or minimize) the summed profits
of the assigned tasks, while the weights do not exceed the agent capacities. Not
all tasks are mandatory to be assigned. Profits and weights can vary between agents.
The classical assignment problem formulates a cost minimization objective,
although maximization, which we use throughout this work, is also commonly found
in problem variants.

In this paper, we formulate rotational diversity in terms of the broad class of general assignment
problems, as our contribution is steered towards the general rotation mechanism.
The closest problem variant is the group of knapsack problems. 
One or multiple agents have to be filled in to maximize the value of the selected tasks~\cite{Martello1987}. 
A multi-cycle knapsack variant is presented in~\cite{Faaland1981}, 
although only the unassigned items from previous cycles are available in subsequent cycles.

Assignment rotation is found in job rotation scheduling \cite{Musliu2018}.
Here, a common goal is to find schedules and work assignments for
humans to avoid fatigue, boredom~\cite{Bhadury2006} or accidents~\cite{Carnahan2000},
or to evenly distribute shifts to personnel~\cite{Bard2005,Ayough2012}.
This is often solved by a fixed schedule,
where the assignment between workers and their tasks frequently changes.
While there is existing work on repairing schedules in case of disruptions or stochastic elements \cite{Bidot2009}, for example in the application of university timetabling \cite{Lindahl2019} or repair scheduling \cite{Bajestani2013}, these approaches require a defined planning horizon with one, possibly large, optimization problem in the beginning and a number of follow-up problems in case of disruptions.
A specific application is further scheduling and assignment under the awareness of uncertainties within the given data \cite{Herroelen2005}, especially for varying task durations or weights, which can be addressed by robust local search \cite{Fu2012} or stochastic optimization models \cite{Song2019}.
In our approach, we solve subsequent assignment problem without a fixed planning horizon.
That is, we do not fix one assignment over multiple cycles, but have to repeatedly create individual assignments at each cycle due to changing availability of agents and tasks.
In relation to the terminology introduced in \cite{Bidot2009}, our presented method is a progressive technique, where each part of the overall schedule is created sequentially at each cycle. However, their framework and terminology addresses the ability to repair schedules under uncertainty and does not include the desire to actively introduce diversity in the assignments between each time-step or cycle.

Opposite to diverse rotations is the concept of \emph{persistence} in robust optimization~\cite{Bertsimas2006,Morrison2010}.
Persistence \cite{Brown1997,Bertsimas2006} considers finding stable assignments
during optimization, such that improvements in the solution objective only cause
small changes in the variable assignment of the solution. 
By maximizing the affinity between tasks and agents, we can adjust the presented
method to support persistent instead of diverse assignments in subsequent
iterations of a problem using similar techniques.
We note that the concept of affinity, which we introduce in Section~\ref{sec:mcgap}, can be
transferred to multi-cycle problems with persistence.

Fair allocations, which maximize a social welfare function, are considered in
game theory research. 
Mechanisms for the resource distribution include \emph{combinatorial auctions} and
\emph{exchanges}~\cite{DeVries2003,Endriss2006}.
Both have shown to result in a balanced and fair distribution of
resources, although it is complex to determine which resources to offer in an
auction or exchange and who is the resulting winner~\cite{Sandholm2003}.
Recent works further discuss aspects of repeated matching between tasks and agents,
under consideration of dynamic preferences and fairness~\cite{Hosseini2015}, or
repeated matching of previously unmatched tasks~\cite{Anshelevich2013}.
Because combinatorial auctions and exchanges can be decentralized,
these techniques are commonly used for resource allocation in multi-agent systems~\cite{Liu2008,Nongaillard2008}.

In this work, we do not directly solve the \gap,
but instrument a general solver to maintain a fair distribution of tasks to agents.
An alternative is a system where agents exchange tasks among them to achieve rotation.
However,
preliminary experiments showed this approach to be inferior to the one presented.
The evaluated exchange model first focused on profits only,
and afterwards aimed for a fair rotation by allowing one-task exchanges between agents.
It showed that a high-quality \gap solution limits the number of choices for one-task
exchanges and only minimal improvements in rotational diversity occur.

\section{Problem Description}
\label{sec:problem}

We first introduce the multi-cycle assignment problem as a combination of two
sub-problems, which we further define afterwards.
Then, we discuss the characteristics of the class of general assignment problems
at the core of our approach.
Finally, we formulate and discuss the requirements for maintaining rotational diversity over multiple cycles.

In a multi-cycle assignment problem, every cycle is a distinct planning unit,
because, due to the availability of tasks and agents, planning ahead is not possible.
Therefore,
rotational diversity has to be considered at every cycle.
This separates the overall problem into two partial sub-problems (as visualized in~\figref{fig:process}):
First,
the \emph{inner problem} is to solve an independent \gap in each cycle \(k\).
The \gap selects a subset of the available tasks while maximizing the sum of
their values.

Second,
the \emph{outer problem} aims to maintain a diverse assignment between tasks to agents,
meaning that the tasks are frequently assigned to all compatible agents over subsequent cycles.
As a mechanism for this balance,
we utilize the \emph{affinity} between a single task and each agent, and the
\emph{affinity pressure} as a metric to evaluate the whole set of tasks
and agents. The balancing mechanism between profit optimization and rotation of
tasks, is called a \emph{strategy}.

The inner problem, as well as both the affinity and the affinity pressure will be further defined and introduced in the following section.
As part of our method, we introduce six strategies for achieving
rotational diversity in Section~\ref{sec:strategies}.

\begin{figure}
\centering
\resizebox{0.95\textwidth}{!}{\tikzstyle{line} = [draw, -latex']
\tikzstyle{cyclebox} = [rectangle, draw, minimum width=3.55cm, minimum height=3.55cm]
\tikzstyle{arrowedge} = [bend right, shorten <=0.75ex, shorten >=0.75ex]


\begin{tikzpicture}[
	mycircle/.style={
         circle,
         draw=black,
         fill=gray,
         fill opacity = 0.3,
         text opacity=1,
         inner sep=0pt,
         minimum size=20pt,
         font=\small},
      gaparrow/.style={-stealth'},
      node distance=0.2cm and 1.6cm
      ]

\node[cyclebox] (cycle1) {};
\node[above, inner sep=3pt] at (cycle1.north) (t1label) {Cycle $1$: \(\gap(\mathrm{T}^1, \mathrm{A}^1)\)};
\node[rectangle, draw, above=1.6cm of cycle1, align=center] (t1) {Balance\\Profit vs. Rotation};

\node[cyclebox, right=0.7cm of cycle1] (cycle2) {};
\node[above, inner sep=3pt] at (cycle2.north) (t2label) {Cycle $2$: \(\gap(\mathrm{T}^2, \mathrm{A}^2)\)};
\node[rectangle, draw, above=1.6cm of cycle2, align=center] (t2) {Balance\\Profit vs. Rotation};

\node[right=0.3cm of cycle2] (dots) {$\dots$};

\node[cyclebox, right=0.3cm of dots] (cyclek) {};
\node[above, inner sep=3pt] at (cyclek.north) (tklabel) {Cycle $k$: \(\gap(\mathrm{T}^k, \mathrm{A}^k)\)};
\node[rectangle, draw, above=1.6cm of cyclek, align=center] (tk) {Balance\\Profit vs. Rotation};

\node[right=0.3cm of cyclek] (dots2) {$\dots$};

\node[left=0.75cm of t1] (tstart) {};
\node[right=1.25cm of tk] (tend) {};

\path[-stealth', line width=2pt] (tstart) edge (t1) {};
\path[-stealth', line width=2pt] (t1) edge (t2) {};
\path[-stealth', line width=2pt, dotted] (t2) edge (tk) {};
\path[-stealth', line width=2pt, dotted] (tk) edge (tend) {};

\path[-stealth', line width=1pt] (t1) edge [arrowedge] (t1label);
\path[-stealth', line width=1pt] (t1label) edge [arrowedge] (t1) {};

\path[-stealth', line width=1pt] (t2) edge [arrowedge] (t2label);
\path[-stealth', line width=1pt] (t2label) edge [arrowedge] (t2) {};

\path[-stealth', line width=1pt] (tk) edge [arrowedge] (tklabel);
\path[-stealth', line width=1pt] (tklabel) edge [arrowedge] (tk) {};

\node[mycircle, below right=0.5cm of cycle1.north west] (t11) {T$1$};
\node[mycircle, below=of t11] (t12) {T$2$};
\node[mycircle, fill opacity=0, draw=none, below=0.01cm of t12] (t1d) {\vdots};
\node[mycircle, below=0.01cm of t1d] (t1n) {T$n^1$};
\node[mycircle, rectangle, right=of t11] (a11) {A$1$};
\node[mycircle, rectangle, below=of a11, right=of t12] (a12) {A$2$};
\node[mycircle, fill opacity=0, draw=none, below=0.01cm of a12, right=of t1d] (a1d) {\vdots};
\node[mycircle, rectangle, below=0.01cm of a1d, right=of t1n] (a1m) {A$m^1$};
\draw [gaparrow] (t11) edge (a11);
\draw [gaparrow] (t12) edge (a1m);
\draw [gaparrow] (t1n) edge (a12);

\node[mycircle, below right=0.5cm of cycle2.north west] (t21) {T$1$};
\node[mycircle, below=of t21] (t22) {T$2$};
\node[mycircle, fill opacity=0, draw=none, below=0.01cm of t22] (t2d) {\vdots};
\node[mycircle, below=0.01cm of t2d] (t2n) {T$n^2$};
\node[mycircle, rectangle, right=of t21] (a21) {A$1$};
\node[mycircle, rectangle, below=of a21, right=of t22] (a22) {A$2$};
\node[mycircle, fill opacity=0, draw=none, below=0.01cm of a22, right=of t2d] (a2d) {\vdots};
\node[mycircle, rectangle, below=0.01cm of a2d, right=of t2n] (a2m) {A$m^2$};
\draw [gaparrow] (t21) edge (a22);
\draw [gaparrow] (t22) edge (a2m);
\draw [gaparrow] (t2n) edge (a21);

\node[mycircle, below right=0.5cm of cyclek.north west] (t31) {T$1$};
\node[mycircle, below=of t31] (t32) {T$2$};
\node[mycircle, fill opacity=0, draw=none, below=0.01cm of t32] (t3d) {\vdots};
\node[mycircle, below=0.01cm of t3d] (t3n) {T$n^k$};
\node[mycircle, rectangle, right=of t31] (a31) {A$1$};
\node[mycircle, rectangle, below=0.01cm of a31, right=of t32] (a32) {A$2$};
\node[mycircle, fill opacity=0, draw=none, below=0.01cm of a32, right=of t3d] (a3d) {\vdots};
\node[mycircle, rectangle, below=0.01cm of a3d, right=of t3n] (a3m) {A$m^k$};
\draw [gaparrow] (t31) edge (a31);
\draw [gaparrow] (t32) edge (a32);
\draw [gaparrow] (t3n) edge (a3m);

\draw [decorate,decoration={brace,amplitude=10pt,mirror,raise=4pt},yshift=0pt]
(cycle1.south west) -- (cyclek.south east) node [black,midway,below,yshift=-0.5cm] {\Large Inner Problem};

\draw [decorate,decoration={brace,amplitude=10pt,raise=4pt},yshift=0pt]
(t1.north west) -- (tk.north east) node [black,midway,above,yshift=0.5cm] {\Large Outer Problem};

\end{tikzpicture}}
\caption{Multi-Cycle Assignment Problem: At each cycle an independent \gap has to be solved, which includes an optimization objective, e.g. maximizing the sum of profits of all assigned tasks. The sets of available tasks and agents can vary between cycle due to different availability.}
\label{fig:process}
\end{figure}

\subsection{Multi-Cycle General Assignment Problem} \label{sec:mcgap}
The general assignment problem, \(\gap(\T, \A)\),
receives as inputs the tasks and agents available at cycle \(k\).
The set of agents, \A, consists of \(m\) integers \(i\), each with a fixed \emph{capacity}, \(b_i\),
and the set of tasks, \T, consists of \(n\) integers \(j\).
Both sets are given at each cycle and can unpredictably change from cycle \(k\)
to \(k+1\).

The relation between a task and an agent has three fixed attributes:
both the \emph{profit} \(p_{ij}\) and the \emph{weight} \(w_{ij}\) are externally fixed and describe the benefit respectively the
resource demand of task \(j\) when assigned to agent \(i\).
Each task further has a set of \emph{compatible agents}, \C, that it can be assigned to.

The \emph{affinity} \(a_{ij}\) is not fixed, but changes between cycles.
The affinity numerically describes the preferred assignments from tasks to agents,
with higher values giving a higher preference for a task to be assigned to that agent.
It is not given as a problem input parameter, unlike the profit, weight and compatibility,
but it is determined as part of method to maintain rotational diversity.

Additionally, we refer to \emph{values} in the context of the optimization objective of the GAP.
Here, the value \(v_{ij}\) is a combination of profits and affinities,
a way to balance profit- and rotation-oriented assignments.
For a standard assignment problem without affinities,
the values equal the profits.

The affinity between a task and an agent, \(\Aff_{ij}\), is the number of cycles
since the last assignment of task \(j\) to agent \(i\).
The affinity quantifies the preference of a task to be assigned to certain agents during the next cycles.
The affinity pressure is the maximum of all affinities in the set of tasks.
Both the affinity and the affinity pressure will be further discussed after a
definition of the inner assignment problem.

\begin{definition}{Multi-Cycle General Assignment Problem}
    \begin{align}
    \text{Maximize } &\sum_{\iina} \sum_{\jint} x_{ij}v_{ij} \label{eq:gap_max}\\
    \text{subject to }
    &\sum_{\jint} x_{ij}w_{ij} \leq b_i, &&\forall\,\iina\label{eq:capacity_limit}\\ %
    &\sum_{\iina} x_{ij} \leq 1, &&\forall\,\jint\label{eq:assign_task_once} %
    \end{align}
    with
    \begin{align}
    &k: \text{Index of the current cycle}\nonumber\\
    &\A: \text{A set of integers $i$ labeling $m$ agents}\nonumber\\
    &\T: \text{A set of integers $j$ labeling $n$ tasks}\nonumber\\
    &b_i: \text{Capacity of agent } i\nonumber\\
    &v_{ij}: \text{Value of task } j \text{ when assigned to agent } i\label{eq:values}\\
    &w_{ij}: \text{Weight of task } j \text{ on agent } i\nonumber\\
    &x_{ij}:
    \begin{cases}
      1 & \text{Task $j$ is assigned to agent $i$} \land i \in \C\\
      0 & \text{otherwise}
    \end{cases}\label{eq:assignment}
    \end{align}
\end{definition}

The problem's objective is to maximize the total sum values of the assigned
tasks~\myeqref{eq:gap_max}. Each agent can hold multiple tasks up to its
resource limit~\myeqref{eq:capacity_limit} and each task is assigned to at most
one agent~\myeqref{eq:assign_task_once}. The assignment of tasks to agents is
constrained by compatibility constraints~\myeqref{eq:assignment}, such that each
task can only be placed on a subset of agents.

We state a very general GAP formulation,
although our proposed approach is able to handle different GAP variants.
The most important and required properties of the formulation are:
a) the possibility to have different values per task and agent~\myeqref{eq:values},
and b) the value maximization objective~\myeqref{eq:gap_max}.

GAP is NP-hard as it reduces to the NP-hard one-dimensional knapsack
optimization problem~\cite{Karp1972}.

\subsection{Rotational Diversity}
\label{sec:rotdiv}

To maintain rotational diversity,
it is necessary to control the affinities between tasks and agents.
As an indicator, the affinity pressure must not grow too high,
which can be avoided by a diverse rotation between tasks and agents.

As part of solving the outer problem,
it is necessary to balance profit maximization and reducing affinities by rotating
the assignments from tasks to agents.
Additional complexity stems from the fact, that at each cycle different sets of
agents and tasks are available and the assignment can only take the current cycle into account.

The optimization in the outer problem could be solved by an exhaustive search of
possible combinations between profits and affinities,
such that an optimal solution can be found.
In practice, this is infeasible,
as it requires to solve the computationally expensive inner \gap problem
multiple times before deciding for the final solution.

\section{Maintaining Rotational Diversity}
\label{sec:approach}

The central idea for maintaining rotational diversity is the manipulation of the values
contributing to the objective of the inner assignment problem (see \figref{fig:mat_combination}).
This adjustment steers the optimization process towards an assignment which is balancing profit maximization
and making diverse assignments.
The adjustment is made according to a strategy and the state of the available resources,
that is tasks and agents available in the current cycle, and their affinities.

\begin{figure}[t]
\begin{equation*}
\begin{array}{>{\centering\arraybackslash$}p{2.2cm}<{$}>{\centering\arraybackslash$}p{0.15cm}<{$}>{\centering\arraybackslash$}p{2.2cm}<{$}>{\centering\arraybackslash}p{0.05cm}>{\centering\arraybackslash$}p{2.2cm}<{$}}
\text{Profits} & & \text{Affinities} & & \text{Values}\\ 
\left[
\begin{smallmatrix}
  p_{1,1} & \cdots & p_{1,n} \\
  \vdots  & \ddots & \vdots  \\
  p_{m,1} & \cdots & p_{m,n} 
\end{smallmatrix}
\right] &
\bigcirc &
\left[
 \begin{smallmatrix}
  a_{1,1} & \cdots & a_{1,n}\\
  \vdots  & \ddots & \vdots\\
  a_{m,1} & \cdots & a_{m,n} 
 \end{smallmatrix}
\right] &
= &
\left[
 \begin{smallmatrix}
  v_{1,1} & \cdots & v_{1,n} \\
  \vdots  & \ddots & \vdots  \\
  v_{m,1} & \cdots & v_{m,n} 
 \end{smallmatrix}
\right]
\end{array}
\end{equation*}
\caption{In the outer problem, profits $p$ and affinities $a$ are combined by a strategy \(\bigcirc\) into single
  values $v$. These values are used to optimize the \gap in the inner problem.}
\label{fig:mat_combination}
\end{figure}

Before introducing different adjustment strategies,
we describe the mechanism to calculate the affinities and the affinity pressure, and the relevance of their values.

\subsection{Assignment Diversity}
\label{sec:affinity}

To achieve rotation of tasks over agents in subsequent cycles,
the cycle-specific assignment problem needs an incentive to assign a task to a
different agent than in previous assignments.

This incentive is described by the notion of affinities between tasks and agents,
describing how important an assignment of a task to an agent is to achieve high rotational diversity.
A low affinity value corresponds to a recent assignment from the task to the agent,
whereas a high affinity indicates the necessity to make this assignment again soon.

The affinities are determined by \emph{Affinity Counting}.

\begin{definition}{Affinity Counting}
\label{def:affinity_counting}

\begin{subnumcases}{\Aff_{ij}^{k} =}
    0                     & if $i \notin \C$\label{eq:ap_incomp} \\
    1                     & if $k = 1 \lor x_{ij}^{k-1} = 1$\label{eq:aff_base} \\
    \Aff_{ij}^{k-1} + 1 & if \(\iina \land\,\jint\) \label{eq:aff_inc}           \\
    \Aff_{ij}^{k-1}     & otherwise\label{eq:aff_unavail}
\end{subnumcases}

\textit{Affinity Counting} counts the number of cycles since the last assignment
from task \(j\) to agent \(i\),
starting from 1 at the first cycle or the last assignment~\eqref{eq:aff_base}.
If a task and agent are incompatible, the affinity is always 0~\eqref{eq:ap_incomp}.
At cycle \(k\),
the affinity increases for non-selected,
but possible assignments in the previous cycle \(k-1\)~\eqref{eq:aff_inc}\eqref{eq:aff_unavail}.
\end{definition}

Naturally, the affinity values increase over time as each task can only be
assigned to one of the compatible agents in each cycle.
This growth is anticipated and acceptable to a certain degree,
while at the same time,
growing affinities show the need to make the corresponding assignment soon.

To monitor the overall state of rotational diversity,
we define the \emph{Affinity Pressure} metric.

\begin{definition}{Affinity Pressure (\AP)} \label{def:ap}

\noindent The Affinity Pressure is defined per cycle \(k\) and task \(j\):
\begin{equation*}
\AP_{j}^{k} = \frac{\sum_{\iina}\Aff_{ij}^{k}}{\Cs} - \frac{\Cs+1}{2}
\end{equation*}

It is the scaled difference between the actual and ideal
affinities, as described below.
For the \AP calculation,
only the task and agents available in that cycle are considered. 
Hence, tasks and agents can be added or completely removed without affecting the \AP values of the remaining tasks.
\end{definition}

In an ideal rotation setting,
the affinities of a task \(j\) form the set \(\{\, i \;|\; 1 \leq i \leq \Cs\,\}\),
with its sum being the triangular number \(\frac{1}{2} \cdot \Cs \cdot (\Cs+1)\).
As the task is (ideally) assigned in every cycle,
the last assignment has affinity 1,
the previous assignment has affinity 2, and so on.
With \Cs compatible agents,
the longest unassigned task then has affinity \Cs.

However, in a practical rotation setting, 
this perfect rotation is hindered by non-availability and limited capacities of the agents.
To evaluate the state of rotational diversity,
it is, therefore, crucial to consider how long a task has not been
assigned to each agent,
but also, from an agent's perspective,
the time it has not executed certain tasks.

The \AP metric is derived from the difference between the sum of current
affinities and the ideal values.
For comparability and normalization,
it is scaled by the number of possible agents: 
\(\frac{1}{\Cs} \cdot \left[\sum_{\iina}\Aff_{ij}^{k} - \frac{1}{2} \cdot \Cs \cdot (\Cs+1)\right]\)

In this formula, the minuend describes the current affinities relative to the
number of possible agents, the subtrahend the ideal case with fully regular rotation.
A positive excess indicates missed assignments to achieve ideal rotation.
Note that the bottom value of 0 is an ideal value,
which in practice is usually not achievable,
due to selection and limited availability of tasks and agents,
and the necessary selection in the \gap assignment problem.
During the first \Cs cycles,
the \AP for a task is negative,
as initially all affinities equal 1.
After \Cs cycles, the \AP is always \(\geq 0\).

\begin{example*}
Figure~\ref{fig:affinity_example} presents an example of affinities and their
development over four cycles.
In the initial cycle \(1\), all affinities equal 1 (or 0 for incompatible assignments) and there is no preferred
assignment among all possible assignments.

Over the next cycles, tasks T1 and T2 rotate over all compatible agents,
resulting in the \AP value 0 for T1 and T3.
Task T3 does not rotate, but is assigned to agent C in two
subsequent cycles, which increases the affinity for the assignment to agent B
and raises the \AP to 0.5, an indicator for the imbalance of T3.
Note that, in cycle \(3\), T3 is unavailable,
but this does not affect its affinities in cycle \(4\).
\end{example*}

\begin{figure}[t]
\begin{subfigure}[t]{0.28\textwidth}
\centering
\begin{tabular}{lrrr|r}
\toprule
                    & A                               & B                               & C                               & \AP                \\
\midrule
T1                  & \cellcolor{lightgray}1          & 1                               & 0                               & -0.5               \\
T2                  & 1                               & \cellcolor{lightgray}1          & 1                               & -1.0               \\
T3                  & 0                               & 1                               & \cellcolor{lightgray}1          & -0.5               \\
\bottomrule
\end{tabular}
\caption{Cycle \(1\)}
\label{tab:affinity_example_a}
\end{subfigure}
\hfill
\begin{subfigure}[t]{0.2\textwidth}
\centering
\begin{tabular}{rrr|r}
\toprule
A                               & B                               & C                               & \AP                \\
\midrule
1                               & \cellcolor{lightgray}\textbf{2} & 0                               & 0                  \\
\cellcolor{lightgray}\textbf{2} & 1                               & \textbf{2}                      & -0.3               \\
0                               & \textbf{2}                      & \cellcolor{lightgray}1          & 0                  \\
\bottomrule
\end{tabular}
\caption{Cycle \(2\)}
\label{tab:affinity_example_b}
\end{subfigure}
\hfill
\begin{subfigure}[t]{0.2\textwidth}
  \centering
\begin{tabular}{rrr|r}
\toprule
A                               & B                               & C                               & \AP                \\
\midrule
\cellcolor{lightgray}\textbf{2} & 1                               & 0                               & 0                  \\
1                               & 2                               & \cellcolor{lightgray}\textbf{3} & 0                  \\
\tikzmark{start3}0                               & \textbf{3}                      & 1                               & 0.5\tikzmark{end3} \\
\bottomrule
\end{tabular}
\caption{Cycle \(3\)}
\label{tab:affinity_example_c}
\end{subfigure}
\tikz[remember picture] \draw[overlay, line width=1pt] ([yshift=.35em]pic cs:start3) -- ([yshift=.35em]pic cs:end3);
\hfill
\begin{subfigure}[t]{0.2\textwidth}
\centering
\begin{tabular}{rrr|r}
\toprule
A                               & B                               & C                               & \AP                \\
\midrule
1                               & \textbf{2}                      & 0                               & 0                  \\
2                               & \textbf{3}                      & 1                               & 0                  \\
0                               & \textbf{3}                      & 1                               & 0.5                  \\
\bottomrule
\end{tabular}
\caption{Cycle \(4\)}
\label{tab:affinity_example_d}
\end{subfigure}
\caption{Affinities and Affinity Pressure of three tasks T1, T2, T3 and agents
  A, B, C over four cycles (Bold: Ideal; Highlighted: Assignment
  in cycle \(k\); Strikethrough: Task unavailable)}

\label{fig:affinity_example}
\end{figure}

\begin{theorem}
For any set of tasks \T and agents \A with constant availability,
if a task \(j\) is always assigned to one of the agents for which it has the highest affinity,
a perfect rotation is achieved and the Affinity Pressure is 0.
\end{theorem}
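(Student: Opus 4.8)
The plan is to fix an arbitrary task $j$ and analyse how its affinities $a_{ij}^{k}$ to the compatible agents $i\in\mathcal{C}^k_j$ evolve over the cycles. Write $c:=|\mathcal{C}^k_j|$; this is a genuine constant because availability never changes. The first thing to note is that constant availability makes the recurrence of Definition~\ref{def:affinity_counting} degenerate: case \eqref{eq:aff_unavail} never applies for $k\geq 2$, incompatible pairs stay at $0$ forever (so they contribute nothing to $\sum_{i\in\mathcal{A}^k}a_{ij}^{k}$ or to $c$), and for every compatible pair we simply have $a_{ij}^{1}=1$ and, for $k>1$, $a_{ij}^{k}=1$ when $x_{ij}^{k-1}=1$ and $a_{ij}^{k}=a_{ij}^{k-1}+1$ otherwise.

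The heart of the proof is a single invariant, proved by induction on $k$: at cycle $k$ the multiset $\{\,a_{ij}^{k}\mid i\in\mathcal{C}^k_j\,\}$ equals $\{1,2,\dots,f(k)-1\}$, each value once, together with $c-f(k)+1$ copies of $f(k)$, where $f(k):=\min(k,c)$. For $k=1$ this reads ``$c$ copies of $1$'', the initial configuration. For the inductive step, the invariant makes $f(k)$ the maximum affinity at cycle $k$, attained by the $c-f(k)+1\geq 1$ agents sitting at that value; by the theorem's hypothesis the cycle‑$k$ assignment of task $j$ goes to one of these agents. Recomputing affinities for cycle $k+1$ resets that agent to $1$, lifts the agents holding $\{1,\dots,f(k)-1\}$ to $\{2,\dots,f(k)\}$, and lifts the remaining copies of $f(k)$ to $f(k)+1$; a two‑line case split on $k<c$ versus $k\geq c$ then shows the result is exactly the configuration prescribed for $f(k+1)$. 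I want to stress that the invariant is deliberately robust to how ties among maximal agents are resolved, which is precisely what ``one of the agents for which it has the highest affinity'' asks for; this robustness, together with correctly handling the boundary cycle $k=c$ where the surplus of equal maxima disappears, is the only delicate point of the argument.

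Finally, specializing the invariant to $k\geq c$ gives $f(k)=c$, hence the affinities of task $j$ are exactly $\{1,2,\dots,c\}$ --- the perfectly rotated, ideal configuration described after Definition~\ref{def:ap} --- and in particular $\sum_{i\in\mathcal{A}^k}a_{ij}^{k}=\sum_{\ell=1}^{c}\ell=\tfrac{1}{2}c(c+1)$. Substituting into Definition~\ref{def:ap},
\[
\mathrm{AP}_{j}^{k}=\frac{\tfrac{1}{2}c(c+1)}{c}-\frac{c+1}{2}=0 .
\]
Since $j$ was an arbitrary task, a perfect rotation and zero Affinity Pressure are attained for every task once the cycle index $k$ reaches $c=|\mathcal{C}^k_j|$, which is the content of the theorem (the earlier discussion already notes that before that point the pressure is still negative). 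The main obstacle is purely bookkeeping: getting the first‑phase induction right, in particular the tie handling and the $k=c$ crossover, which the $\min(k,c)$ cutoff in the invariant is designed to absorb.
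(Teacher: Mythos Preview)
Your proof is correct and follows essentially the same approach as the paper: both arguments show that, under constant availability and highest-affinity assignment, the affinity vector of a task stabilises to the set $\{1,\dots,c\}$ after $c$ cycles, whence the sum is the triangular number and $\mathrm{AP}=0$. Your version is simply more rigorous than the paper's --- you make the transient phase and the tie-breaking explicit via the $\min(k,c)$ invariant, whereas the paper states the terminal configuration directly without an induction.
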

\begin{proof}
With \(N\) possible agents, it takes \(N\) cycles to assign a task once to every
agent. The affinity is set to \(1\) after the assignment was made and is
increased by \(1\) at every cycle. After each assignment was made once,
the affinity to the first assigned agent is \(N\) again,
the affinity of the second assigned agent is \(N-1\),
and the affinity of the last assigned agent is \(1\).
The sum of affinities is \(\sum_{\iina}\Aff_{ij}^{k}  = \sum_{i=1}^{N} i = \frac{1}{2}N(N+1)\).
Using Definition~\ref{def:ap}, and because the number of available agents is constantly \(\Cs = N\), it follows \(\AP = 0\).
\end{proof}

\subsection{Strategies}
\label{sec:strategies}

A central strategy balances profit maximization and diverse assignments,
by controlling the combination of profits and affinities into values.
This combination then steers the focus of the single-objective GAP solver.

\begin{algorithm}[t]
  \begin{algorithmic}[1]
    \Function{$\mathrm{ExecuteCycle}$}{$\T, \A$}
    \State {$AP^k \gets \text{Calculate Affinity Pressure }\AP(\T,\;\A)$}
    \State {$\T_{Values} \gets \mathrm{Strategy}(\T,\;\A)$}
    \State {$\mathrm{Assignment} \gets \text{Solve }\mathrm{\gap}(\T_{Values},\;\A) $}
    \State {$\T,\;\A \gets \mathrm{UpdateAffinity}(\T,\;\A,\;\mathrm{Assignment})$}
    \State \Return {$\T,\;\A,\;\mathrm{Assignment}$}
    \EndFunction
  \end{algorithmic}
\caption{Solving a single cycle of the multi-cycle assignment problem under consideration of rotational diversity.} 
\label{alg:schema}
\end{algorithm}

The general optimization scheme for a single cycle is shown in Algorithm~\ref{alg:schema}.
First, the state of the system, given by available tasks and agents,
and the affinity pressure are gathered.
Second,
the task values are derived, and the cycle’s \gap is solved.
Finally,
based on the actual assignments,
the affinities of the available tasks are updated.
This procedure adds little overhead to a process where no  
rotation is considered, as the main computational effort remains in the central \gap.

The selected strategy remains constant for the whole process, i.e. every cycle uses the same strategy.
It is nevertheless possible for the strategy to be adaptive and adjust its behaviour according to current state of tasks, agents, and affinities.
At the beginning of every cycle,
the strategy calculates the profit values,
based on profits, affinities,
and (if required) other information about the current state.
These values are then taken as parameters in the
current cycle's GAP instance.

In the following, we present six strategies to control rotational diversity:

\begin{strategy}[Objective Switch (\OS)] \label{sec:objswitch}
The Objective Switch strategy maintains rotational diversity by monitoring the
affinity pressure, and, if it reaches a threshold \(\gamma\), switches from
profit to affinity values:
\begin{equation*}
v_{ij} \triangleq \begin{cases}
p_{ij} & \text{if } \gamma >  \max_{\jint}\AP_{j}^{k}\\
\Aff_{ij} & \text{otherwise}
\end{cases}\,,\quad \forall\,\iina,\; \jint
\end{equation*}

The threshold $\gamma$ is a fixed, user-defined configuration parameter,
and selected according to the desired trade-off between maximized
profits and high rotational diversity.

The objective switch strategy exchanges the focus of the optimization procedure to specifically address a single optimization goal.
It has the intuition, that it is most effective to focus on the rotation goal as soon as the need, quantified by the affinity pressure and $\gamma$, arises.
\end{strategy}

\begin{strategy}[Product Combination (PC)] \label{sec:productcomb}
In the Product Combination strategy,
profit and affinities are multiplied to form the task values:
\begin{equation*}
  v_{ij} \triangleq p_{ij}^\alpha \cdot \Aff_{ij}^\beta\,,\quad \forall\,\iina,\; \jint
\end{equation*}

The exponents $\alpha$ and $\beta$ allow configuration of the strategy to
emphasize one aspect or to account for different scales of profits and
affinities in specific applications.
In our experiments, we found a standard configuration of $\alpha = \beta = 1$ to be
intuitive and well-performing.
Therefore, this strategy does not require additional configuration,
but allows for adjustments if necessary.

In the PC strategy, there is not active reaction on the overall state of
rotational diversity, as in the \OS strategy,
but higher affinities values implicitly influence the profits and put an
emphasis on tasks with missing rotation.
\end{strategy}

\begin{strategy}[Weighted Partial Profits (WPP)] \label{sec:wpp}
The WPP strategy calculates task values with a weighted sum:
\begin{equation*}
  v_{ij} \triangleq \lambda_{j}^k \cdot \frac{p_{ij}}{\displaystyle \max_{\iina} \max_{\jint} p_{ij}}
  + (1-\lambda_{j}^k) \cdot \frac{\Aff_{ij}}{\displaystyle \max_{\iina} \max_{\jint}\Aff_{ij}}\,,\quad \forall\,\iina,\; \jint
\end{equation*}
The task- and cycle-specific weight parameter \(\lambda_{j}^{k}\) balances the influence of each objective on the
final value \(v_{ij}\).
\(\lambda_{j}^{k}\) is self-adaptive and depends on the ratio between ideal and actual affinities,
similar to the affinity pressure.
When the rotational diversity is high,
the influence of the profits is high, too,
otherwise the affinities have higher influence:
\begin{equation*}
  \lambda_{j}^k = \frac{\frac{1}{2} \cdot \Cs \cdot (\Cs + 1)}{\sum_{\iina} \Aff_{ij}}\,,\quad \forall\,\jint
\end{equation*}

To account for different value ranges, both profits and affinities
are scaled to \([0, 1]\) by their respective maxima.
\end{strategy}

\begin{strategy}[Fixed Objective: Profit (FOP)] \label{sec:fop}
Each task value equals the static profit value: 
\begin{equation*}
    v_{ij} \triangleq p_{ij}\,,\quad \forall\,\iina,\; \jint
\end{equation*}
\end{strategy}
\begin{strategy}[Fixed Objective: Affinity (FOA)] \label{sec:foa}
Each task value equals the affinity value:
\begin{equation*}
v_{ij} \triangleq \Aff_{ij}\,,\quad \forall\,\iina,\; \jint
\end{equation*}
\end{strategy}

FOP and FOA represent special cases of the PC strategy, with $\beta = 0$
respectively $\alpha = 0$.
These strategies are the two most extreme approaches,
because each of them ignores the other goal, albeit profits or affinities.
They serve as comparison baselines to evaluate the trade-offs by the other strategies.

In contrast to the discussed strategies, which manipulate the task values, we consider an additional approach to maintain rotational diversity.
This approach does not only manipulate the task values, but also restricts the possible assignments between tasks and agents.

\subsection{Limited Assignment}
\label{sec:limitedassignment}
The \emph{Limited Assignment} approach explicitly constrains a task \(i\) to be assigned to compatible and available agents with a high affinity value.
This is achieved by artificially limiting the possible assignments through temporarily manipulating the compatibility between tasks and agents.
Limited Assignment does not further manipulate the profit values, but only works on the level of possible assignments, i.e. the task value equals the static profit value: $v_{ij} \triangleq p_{ij}$
Therefore, this approach can be combined with any of the strategies as an additional control mechanism.

Specifically, the compatibility between a task and an agent is removed if the affinity $a_{ij}$ is below a threshold value.
The threshold value $th$ can either be a static, user-defined parameter, or dynamically adapted.
As a heuristic for the threshold value, we propose using the mean affinity between a task and all available and compatible agents, rounded to the next smallest integer:
\begin{equation}
    Threshold_j = \floor*{\frac{1}{\Cs} \sum_{i\,\in\,\C} \Aff_{ij}},\quad \forall\,\jint
\end{equation}


As this approach reduces the search space of possible solutions for the instance, it can lead to the removal of optimal solutions from the solution space.
However, in many settings it is neither necessary nor possible to find the optimal solution, and finding a near-optimal solution in a reduced solution space is sufficient.

To further understand the Limited Assignment approach, we revisit the example for affinity calculation (\figref{fig:affinity_example}) from \secref{sec:affinity}.

\begin{example*}
Figure~\ref{fig:affinity_example_la} presents an example of affinities and their
development over four cycles.
In the initial cycle \(1\), all affinities equal 1 (or -- for incompatible assignments) and there is no preferred assignment among all possible assignments.

\begin{figure}[t]
\begin{subfigure}[t]{0.3\textwidth}
\centering
\begin{tabular}{lrrr|r}
\toprule
                    & A                               & B                               & C                               & Th                \\
\midrule
T1                  & \cellcolor{lightgray}1          & 1                               & 0                               & 1               \\
T2                  & 1                               & \cellcolor{lightgray}1          & 1                               & 1               \\
T3                  & 0                               & 1                               & \cellcolor{lightgray}1          & 1               \\
\bottomrule
\end{tabular}
\caption{Cycle \(1\)}
\label{tab:affinity_example_la_a}
\end{subfigure}%
\hfill%
\begin{subfigure}[t]{0.2\textwidth}
\centering
\begin{tabular}{rrr|r}
\toprule
A                               & B                               & C                               & Th               \\
\midrule
1                               & \cellcolor{lightgray}\textbf{2} & 0                               & 1                  \\
\cellcolor{lightgray}\textbf{2} & X                               & \textbf{2}                      & 2               \\
0                               & \textbf{2}                      & \cellcolor{lightgray}1          & 1                  \\
\bottomrule
\end{tabular}
\caption{Cycle \(2\)}
\label{tab:affinity_example_la_b}
\end{subfigure}%
\hfill%
\begin{subfigure}[t]{0.2\textwidth}
  \centering
\begin{tabular}{rrr|r}
\toprule
A                               & B                               & C                               & Th                \\
\midrule
\cellcolor{lightgray}\textbf{2} & 1                               & 0                               & 1                  \\
X                               & 2                               & \cellcolor{lightgray}\textbf{3} & 2                  \\
\tikzmark{start3la}0                               & \textbf{3}                      & 1                               & 2\tikzmark{end3la} \\
\bottomrule
\end{tabular}
\caption{Cycle \(3\)}
\label{tab:affinity_example_la_c}
\end{subfigure}
\tikz[remember picture] \draw[overlay, line width=1pt] ([yshift=.35em]pic cs:start3la) -- ([yshift=.35em]pic cs:end3la);
\hfill
\begin{subfigure}[t]{0.2\textwidth}
\centering
\begin{tabular}{rrr|r}
\toprule
A                               & B                               & C                               & Th               \\
\midrule
1                               & \textbf{2}                      & 0                               & 1                  \\
2                               & \textbf{3}                      & X                               & 2                  \\
0                               & \textbf{3}                      & X                               & 2                  \\
\bottomrule
\end{tabular}
\caption{Cycle \(4\)}
\label{tab:affinity_example_la_d}
\end{subfigure}
\caption{Effect of Limited Assignment on three tasks T1, T2, T3 and agents A, B, C over four cycles (Th: Threshold value; X: Compatibility temporarily removed through Limited Assignment; Bold: Ideal next assignment; Highlighted: Assignment in cycle \(k\); Strikethrough: Task unavailable)}

\label{fig:affinity_example_la}
\end{figure}

Over the next cycles, tasks T1 and T2 rotate over all compatible agents,
resulting in the \AP value 0 for T1 and T3.
Task T3 does not rotate, but is assigned to agent C in two
subsequent cycles, which increases the affinity for the assignment to agent B
and raises the \AP to 0.5, an indicator for the imbalance of T3.
Note that, in cycle \(3\), T3 is unavailable,
but this does not affect its affinities in cycle \(4\).
\end{example*}

\section{Experimental Evaluation}
\label{sec:experiments}

We consider three problems for evaluation: 
a) a multi-cycle variant (MCMKP)
of the known multiple knapsack problem (MKP) to evaluate trade-offs between the
strategies;
b) a multi-cycle variant (MCMSSP) of the multiple subset sum problem (MSSP) to evaluate the behaviour with a smaller number of agents and low task availability;
c) test case selection and assignment (TCSA) as a real-world case study from the software testing domain to evaluate the practical interest of our approach.

\subsection{Implementation and Setup}
Our strategies and the experimental setup are implemented in Python.
The assignment problem is modeled with MiniZinc~2.0~\cite{Nethercote2007},
following the presented \gap formulation, and is solved with IBM~CPLEX~12.8.0.
Our implementation and all test data is available online at \url{https://github.com/HelgeS/mcap_rotational_diversity}.

We note that there are further domain-specific heuristics and exact algorithms to solve knapsack problems, e.g. \cite{Fukunaga2007}, but as the \gap model and its solver are a black-box to our
strategies, their optimization is not in the scope of our work, and we employ a generic model formulation and solver.
To ensure the solution quality with a reasonable time-contract for the solver,
we compared it on a set of sample instances with mulknap\footnote{\url{http://www.diku.dk/~pisinger/codes.html}},
an exact MKP solver~\cite{Pisinger1999}.
With a 60 second timeout,
CPLEX achieves on average 99.5\,\,\% of the optimal solution calculated by mulknap.

All strategies are run on each scenario with a 60 second timeout for the \gap solver.
The thresholds \(\gamma\) for the Objective Switch strategy are 10, 20, 30, and 40, except for the MCMSSP problem, where we use smaller $\gamma$ of 1, 2, 4, and 10.

We evaluate the full rotation of tasks over agents, both looking at all tasks, and at
each individual task.
One full rotation over all tasks is achieved, when each task was assigned once to all
compatible agents.
The rotation over one task describes how often a task is assigned to
its compatible agents on average.
These numbers can be different.
If few tasks are not rotated,
those forestall full rotations,
but allow other tasks to be frequently rotated.

Furthermore, we compare the achieved profit of the assignments with the profit
of the FOP strategy, which does not consider rotation and only maximizes profit.
As the other experimental parameters are the same and also the same assignment model is used,
FOP simulates the baseline setting without rotation-awareness.

We have considered an additional baseline,
where the full multi-cycle assignment problem is optimized as one single optimization model.
This differs from our method, as each task's and agent's availability is known already in the beginning.
However, due to the exceeding model size, solving the extended \gap model is computationally expensive and did not yield
a comparable solution within 24 CPU hours, which is substantially more than the total
computational cost of successively optimizing individual cycles.
Therefore, we do not further consider this baseline.

\subsection{Multi-Cycle Multiple Knapsack Problem} \label{sec:mcmkp}
MKP is a variant of the 0-1 knapsack problem, and thereby of GAP,
with multiple agents, i.e. knapsacks~\cite{Martello1990,Pisinger1995}.

We extend MKP to a multi-cycle variant (MCMKP) with limited availability of tasks and agents.
In every cycle, the same MKP instance has to be solved under
consideration of the assignments made in previous cycles and changing
availability of tasks and agents.

\subsubsection{Setup}

To generate problem instances, we employ the procedure by
\authorcite{Pisinger1999}, as described in \authorcite{Fukunaga2011},
and extend it to the notion of compatibility and availability.
An instance is generated by first creating random tasks with weights from a uniform distribution ($w_j \sim \mathcal{U}[10,
1000]$).
The profits of the tasks are either uncorrelated, i.e. profits are drawn from the same uniform distribution, 
or weakly correlated, i.e. the profits are calculated by $p_{j} = w_j + \mathcal{U}[-99, +99]\,, \forall\,\jint$.

After generating the tasks, the agents $a_1, a_2, a_i, \dots, a_{m-1}$ are generated and set to 40--60\,\% of the tasks' weight.
An exception is the last agent $a_m$, whose capacity is set such that the total capacity of all agents equals half of
the tasks' demand.
The instance sizes are 30/75, 15/45, and 12/48 agents, respectively tasks. 
For this generation scheme, a ratio \(|\T|/|\A|\) slightly larger than 2
leads to hard instances, while instances of higher ratios become easier to
solve~\cite{Fukunaga2011}.
The number of cycles is three times the number of tasks, to allow multiple assignments between tasks and agents, even if an agent
has only capacity for one task.

A notion of compatibility is implicit in the generation procedure.
Tasks that do not fit into an agent's capacity are automatically incompatible.
However, this skews the number of compatible tasks to those agents with high capacities,
and puts more emphasis onto their assignments.

From all combinations of the four parameters, we generate 24 instances with in total 4032 assignment problems for evaluation.
Every instance is run with each strategy alone and in combination with Limited Assignment.

\begin{table}[t]
  \centering
  \begin{tabularx}{0.98\textwidth}{lllRRRRR}
\toprule
\multicolumn{3}{r}{Available Agents} & 75\% & 75\% & 100\% & 100\% & \\
\multicolumn{3}{r}{Available Tasks} &         75\% &        100\% &         75\% & 100\% & Average \\
\midrule
\multirow{16}{*}{\rotatebox[origin=c]{90}{(a) Pure Strategies}} &
\multirow{8}{*}{\rotatebox[origin=c]{90}{Rotational Diversity}} 
                & OS/10 & \textbf{2.0 (4.3)} &  \textbf{2.0 (4.6)} &  \textbf{3.0 (5.3)} &  \textbf{3.3 (5.9)} &  \textbf{2.0 (5.0)} \\
&& OS/20 &  1.6 (3.8) &  2.0 (4.4) &  1.6 (3.9) &  3.0 (5.2) &  2.0 (4.3) \\
&& OS/30 &  1.0 (3.1) &  1.3 (3.8) &  1.3 (3.3) &  2.3 (4.4) &  1.0 (3.7) \\
&& OS/40 &  0.6 (2.7) &  1.3 (3.3) &  0.6 (3.0) &  1.6 (3.8) &  1.0 (3.2) \\
&& PC  &  0.0 (4.2) &  0.0 (4.4)  &  0.0 (5.4) &  0.0 (5.9) &  0.0 (5.0) \\
&& WPP &  1.3 (4.0) &  1.6 (4.2) &  1.6 (4.9) &  1.6 (5.3) &  1.0 (4.6) \\
\cmidrule{3-8}
&& FOA &  2.3 (4.5) &  2.0 (4.8) &  3.0 (5.7) &  3.3 (6.1) &  2.0 (5.3) \\
&& FOP &  0.0 (1.6) &  0.0 (1.5) &  0.0 (1.9) &  0.0 (2.0) &  0.0 (1.7) \\
\cmidrule{2-8}
& \multirow{8}{*}{\rotatebox[origin=c]{90}{Profit (\% of FOP)}}  & OS/10 &       87.8 &       83.7 &       88.5 &       84.9 &       86.2 \\
&& OS/20 &       90.0 &       84.6 &       92.0 &       86.8 &       88.4 \\
&& OS/30 &       92.9 &       87.1 &       94.2 &       89.7 &       91.0 \\
&& OS/40 &       \textbf{94.9} &       89.2 &       \textbf{95.7} &       \textbf{92.1} &       \textbf{93.0} \\
&& PC &          92.3 &       \textbf{90.5} &       90.5 &       90.5 &       91.0 \\
&& WPP &         88.6 &       83.1 &       93.3 &       88.1 &       88.3 \\
                  \cmidrule{3-8}
&& FOA &         87.0 &       82.8 &       86.8 &       84.1 &       85.1 \\
&& FOP &         100.0 &         100.0 &        100.0 &        100.0 &         100.0 \\
\midrule
\midrule
\multirow{16}{*}{\rotatebox[origin=c]{90}{(b) Limited Assignment}} &
\multirow{8}{*}{\rotatebox[origin=c]{90}{Rotational Diversity}} & OS/10 &  2.0 (4.4) &  2.0 (4.7) &  2.6 (5.3) &  3.3 (6.1) &  2.0 (5.1) \\
&& OS/20 &  1.6 (4.1) &  2.0 (4.6) &  1.6 (4.7) &  3.0 (5.7) &  2.0 (4.8) \\
&& OS/30 &  1.3 (3.7) &  1.6 (4.3) &  1.3 (4.5) &  2.3 (5.2) &  1.0 (4.4) \\
&& OS/40 &  1.0 (3.4) &  1.3 (4.0) &  0.6 (4.2) &  1.6 (4.6) &  1.0 (4.1) \\
&& PC &  0.0 (4.2) &  0.0 (4.5) &  0.0 (5.4) &  0.0 (5.9) &  0.0 (5.0) \\
&& WPP &  \textbf{2.3 (4.4)} &  \textbf{2.3 (4.6)} &  \textbf{3.3 (5.8)} &  \textbf{4.0 (6.0)} &  \textbf{3.0 (5.2)} \\
\cmidrule{3-8}
&& FOA &  2.0 (4.5) &  2.0 (4.8) &  3.0 (5.8) &  3.3 (6.2) &  2.0 (5.3) \\
&& FOP &  0.0 (2.9) &  0.0 (3.0) &  0.0 (3.5) &  0.0 (3.6) &  0.0 (3.3) \\
                  \cmidrule{2-8}
& \multirow{8}{*}{\rotatebox[origin=c]{90}{Profit (\% of FOP)}} & OS/10 &       87.6 &       83.5 &       88.2 &       84.8 &         86.0 \\
&& OS/20 &       89.1 &       84.3 &       90.8 &       86.4 &       87.6 \\
&& OS/30 &       91.8 &       86.3 &       91.6 &       88.7 &       89.6 \\
&& OS/40 &       \textbf{93.0} &       88.3 &       \textbf{92.3} &       \textbf{90.4} &         \textbf{91.0} \\
&& PC &         	92.0 &       \textbf{90.2} &       90.4 &       \textbf{90.4} &       90.8 \\
&& WPP &       	85.5 &       81.2 &       86.3 &       83.2 &       84.1 \\
                  \cmidrule{3-8}
&& FOA &       87.0 &       82.8 &       86.7 &       84.1 &       85.1 \\
&& FOP &       95.9 &       95.6 &       91.3 &       94.5 &       94.3 \\
\bottomrule
\end{tabularx}

  \caption{Results for MCMKP. The best results for each (a) pure strategies and (b) Limited Assignment are marked in bold without considering the FOA/FOP baselines.}
  \label{tab:res_mcmkp}
\end{table}

\subsubsection{Pure Strategies}

We first discuss the rotational diversity results grouped by agent and task
availability, that is in four different groups,
as this is the main differentiating attribute of the MCMKP scenarios.
The results for the pure strategies without limited assignment are shown in the upper half of \tabref{tab:res_mcmkp} and the results for the combination with limited assignment in the lower half.
The profit values of all MCMKP results have been scaled in relation to the best achieved profit, which corresponds to the profit-only optimization without limited assignment. That means, also the results for the strategies with limited assignment in \tabref{tab:res_mcmkp} are scaled in relation to the results without limited assignment.
All strategies rank between the extreme baselines, FOA and FOP, that only focus on one aspect of the problem formulation, either rotation or profit optimization.

In the MCMKP, specifically the \OS strategies are effective to achieve either good rotation or good profit optimization while still having better results in the other optimization goal in comparison to the baselines. However, as the $\gamma$ value is a fixed parameter of these strategies, the strategies cannot effectively balance the two goals for a better result.
With a low $\gamma = 10$, OS/10 shows similar performance than FOA, but with a small improvement on profit optimization in cycles where the overall affinity pressure is low.
With the higher $\gamma = 40$, OS/40 has the highest profit among the strategies, except FOP, and still better rotation results than FOP and, in terms of full rotations, also than PC.

The PC strategy multiplies profits and affinities into one value and achieves competitive results for the MCMKP scenario in terms of profit maximization and average rotations, but it does not yield a schedule with full rotations of task.
This is an indication of one or a few tasks that are blocked from achieving full rotations, e.g. because they have low profit values or limited availability either in themselves or the compatible agents, and are therefore not sufficiently scheduled.
As noted before, the MCMKP generation procedure shifts the compatibility of tasks to the last agent, to which most tasks are compatible.
Still, its capacity is limited and only a few tasks, especially those with a height weight, can be assigned per cycle. MCKMP thereby creates a bottleneck for diverse assignments, which PC does not efficiently overcome.
Strategies which focus more directly on achieving rotational diversity have an advantage in that case.
However, we see this result specifically in the MCMKP scenario, but not for the other case study, which we will discuss below.

WPP shows a similar performance as OS/20 while following a more flexible value combination strategy than switching the objective.
This results in very similar average results over all scenarios, but larger differences, for example in the case of 100\,\% availability for both agents and tasks.
Here, OS/20 achieves $3$ full rotations, but WPP only $1.6$, even though the average rotations are similar with $5.2$ respectively $5.3$.
We can attribute the potential miss of full rotations in WPP to the same causes as PC as both strategies follow a similar approach of combining the profit and the affinities into a single value, whereas \OS uses either the one or the other. 

Furthermore, we analyze the influence of varying availability on the achievable rotations. 
As the setting is such that a selection of tasks has to occur (the resource demand is higher than the resource supply), the availability of a large number of agents has a stronger influence than a high task availability.
However, for making diverse assignments, a high task availability is beneficial. 
This can be seen when comparing the results with 75\,\%/100\,\% and 100\,\%/75\,\% agent respectively task availability. 
The more profit-oriented variants of \OS achieve better rotation in the former than in the latter
case, as \OS switches focus, and potentially the optimization objective of a cycle does not match the availability of the tasks. 
Then, one task might only be present at profit maximization, but not for rotation optimization.

\subsubsection{Limited Assignment}
When considering the strategies in combination with Limited Assignment, where the compatibility between tasks and agents is limited to those pairings with high affinity values, we observe two main effects in the MCMKP case study.
First, reducing the search space for assignments is effective for increasing rotational diversity.
For all strategies, an increase in average rotations can be observed and for all strategies, except PC, also an increase in full rotations.
Even FOA, which already in its pure version optimizes rotations can benefit from the limitations on possible assignments, albeit only to a small degree.
This is an indication that the pure version might make assignments with little benefit, either to optimize the cycle's objective by choosing two low-affinity tasks with small weights over a high-affinity task with higher weight or due to limited time to solve the inner problem.
The biggest performance difference in rotational diversity is observed for WPP, which maintains the highest level of rotational diversity, and actually surpasses even the FOA baseline in terms of full rotations, but not average rotations.

Second, while increasing the maintainability of rotational diversity through limited the search space, solutions with a high profit value are potentially removed and the total profit is expected to be reduced when using limited assignment.
This effect is reflected in the results, but only to a small degree for most strategies.
The largest difference occurs in the scenario with 100\,\% available agents and 75\,\% available tasks.
Here, the profit of WPP is reduced by 7 and FOP by 8.7 percentage points.

\subsubsection{Distribution of Results}
\begin{figure}[t]
    \centering
    \includegraphics{figures/mcmkp_pareto.pgf}
    \caption{MCMKP: Distribution of the average results for each strategy without (marked with red circle) and with (marked with black X) limited assignment. Rotations are calculated as $\text{Full Rotations} + \text{Average Rotations}/10$.}
    \label{fig:mcmkp_pareto}
\end{figure}

In \figref{fig:mcmkp_pareto}, the distribution of the average profit and rotational diversity over all problem instances is visualized.
Each point corresponds to the average result of the strategy, including those with the additional limited assignment strategy.
The full and average rotations shown in Table~\ref{tab:res_mcmkp} are aggregated into a single value as $\text{Full Rotations} + \text{Avg. Rotations}/10$.
It shows the trade-off between profit maximization and achieving high rotational diversity, as seen by the extreme points FOP and WPP with limited assignment.

The figure visually confirms the previous discussion on the spread of the strategies' performance with WPP on the one end, achieving high rotational diversity, but comparatively low profit-optimization, and FOP on the other end.
In between these extreme points, the other strategies are spread. 
In terms of solution dominance, OS/20 and OS/40 without Limited Assignment show a balanced performance that surpasses the other strategies in either one of the objectives, while not being worse in the other.

However, in general, the role of the trade-off between rotations and profit has to evaluated in the context of the specific use case.
While it can be acceptable to observe a moderate to high decrease in profit in some applications and therefore maintain rotational diversity in a balanced manner, in other use cases profit optimization is the main driver and rotations only of secondary importance.
Depending on how these two goals are valued, or if both are equally important, a more informed decision on the applicable strategy can be made.

\subsection{Multi-Cycle Multiple Subset Sum}

The second experiment evaluates the proposed method on an extension of another assignment problem variant, the multiple subset sum problem (MSSP) \cite{Martello1990,Caprara2000}.
In MSSP tasks are assigned to a number of identical bins with limited capacity, such that the total sum of weights of packed items is maximized.
For our \gap formulation MSSP is implemented as an assignment problem with a number of agents with equal capacity and tasks with same weight and profit.
The multi-cycle variant of MSSP (MCMSSP) requires, as in the previous experiments, to repeatedly solve variations of a MSSP instance under the additional goal to produce diverse assignments over subsequent cycles.

One practical example of this problem is described in \cite{Caprara2000}.
In a production planning setting where a given number of fixed size raw material is available at each day, e.g. steel slabs, it is necessary to distribute the products, which have to be produced, onto the slabs such that only little material is wasted.
The aspect of rotational diversity can be introduced if the raw material is of fixed size, but with different attributes, e.g. color, and all products should be more or less equally often produced in each material type.
If the product demand is further irregular, it is difficult to plan ahead and the product respectively task availability can each between production cycles.

\subsubsection{Setup}

For problem instances, we base our generation procedure again on \cite{Fukunaga2011}. 
In this scenario task weights are uniformly drawn from the low precision distribution $[10, 100]$ and a task's profit equals its weight, $p_j = w_j$ with the same weight for all agents.
All agents have the same capacity, $b_i = c, \forall \iina$, where $c$ is chosen to be 50\,\% of the weight of all tasks, $c = 0.5 \sum_{\jint} w_j$.
Our focus in this experiment is on the influence of task availability on the rotation mechanism.
Therefore, we do not limit the agents' availability or the compatibility between tasks and agents.
All agents are available in every cycle and every task can be assigned to every agent.
However, the total capacity of the agents is limited, such that a selection of tasks has to be made, and the availability of tasks is limited.
We generate instances of 100 cycles with 20 tasks and either 1 or 5 agents and a task availability of either 50 or 75\,\% per cycle.
Due to the smaller number of agents in this experiment, we reduce the values for $\gamma$ in the \OS strategy to 1, 2, 4 and 10.

\subsubsection{Single Agent Scenarios}

\begin{table}[t]
    \centering
    \begin{tabular}{lllrrrr}
\toprule
\multicolumn{3}{l}{Number of Agents}  & 1 & 1 & 5 & 5 \\
\multicolumn{3}{l}{Num. of Tasks/Available}   & 20 / 50\,\% & 20 / 75\,\% & 20 / 50\,\% & 20 / 75\,\% \\
\midrule
\multirow{16}{*}{\rotatebox[origin=c]{90}{(a) Pure Strategies}} &
\multirow{8}{*}{\rotatebox[origin=c]{90}{Rotational Diversity}} 
 & OS/1 &  \textbf{32 (44.9)} &  29 (55.5) &  5 (7.6) &  6 (9.5) \\
&& OS/2 &  28 (44.0) &  36 (51.9) &  4 (7.0) &  5 (9.4) \\
&& OS/4 &  27 (43.7) &  31 (49.0) &  3 (5.7) &  5 (9.1) \\
&& OS/10 &  27 (43.7) &  28 (47.8) &  1 (4.6) &  2 (7.6) \\
&& PC    &  30 (44.1) &  \textbf{41 (49.8)} &  \textbf{5 (7.6)} &  5 (9.0) \\
&& WPP   &  30 (43.7) &  38 (48.4) &  5 (7.4) &  \textbf{6 (9.0)} \\
\cmidrule{3-7}
&& FOA   &  29 (45.4) &  28 (55.5) &  4 (7.8) &  5 (9.5) \\
&& FOP   &  27 (43.7) &  27 (47.8) &  1 (4.4) &  0 (6.6) \\
\cmidrule{2-7}
& \multirow{8}{*}{\rotatebox[origin=c]{90}{Profit (\% of FOP)}} 
 & OS/1 &         97.8 &         95.2 &  96.0 & 93.0 \\
&& OS/2 &         99.5 &         97.1 &  96.5 & 93.2 \\
&& OS/4 &         \textbf{100.0} &         99.1 &       98.2 &       93.9 \\
&& OS/10 &        \textbf{100.0} &          \textbf{100.0} &       \textbf{99.8} &       \textbf{97.7} \\
&& PC    &        99.9 &           99.9 &         97.6 &       95.0 \\
&& WPP   &        99.9 &           99.9 &         99.3 &       94.8 \\
\cmidrule{3-7}
&& FOA &          96.4 &           95.0 &         95.5 &       92.9 \\
&& FOP &         100.0 &          100.0 &        100.0 &      100.0 \\
\midrule
\midrule
\multirow{16}{*}{\rotatebox[origin=c]{90}{(b) Limited Assignment}} &
\multirow{8}{*}{\rotatebox[origin=c]{90}{Rotational Diversity}} 
 & OS/1 &  \textbf{32 (44.9)} &  29 (55.5) &  4 (7.5) &  5 (9.4) \\
&& OS/2 &  28 (44.0) &  36 (51.9) &  4 (7.2) &  5 (9.5) \\
&& OS/4 &  27 (43.7) &  31 (49.0) &  4 (7.0) &  5 (9.3) \\
&& OS/10 &  27 (43.7) &  28 (47.8) &  2 (6.7) &  3 (8.8) \\
&& PC &  30 (44.1) &  \textbf{41 (49.8)} &  \textbf{6 (7.7)} &  5 (9.1) \\
&& WPP &  30 (43.7) &  38 (48.4) &  5 (7.6) &  \textbf{6 (9.1)} \\
\cmidrule{3-7}
&& FOA &  29 (45.4) &  28 (55.5) &  5 (7.7) &  5 (9.5) \\
&& FOP &  27 (43.7) &  27 (47.8) &  2 (6.7) &  0 (8.6) \\
\cmidrule{2-7}
& \multirow{8}{*}{\rotatebox[origin=c]{90}{Profit (\% of FOP)}} 
 & OS/1 &         97.8 &         95.2 &       95.3 &       92.7 \\
&& OS/2 &         99.5 &         97.1 &       96.7 &       92.9 \\
&& OS/4 &          \textbf{100.0} &         99.1 &       98.8 &       93.8 \\
&& OS/10 &          \textbf{100.0} &          \textbf{100.0} &       \textbf{99.5} &       \textbf{97.9} \\
&& PC &         99.9 &         99.9 &       97.3 &       94.5 \\
&& WPP &         99.9 &         99.9 &      96.0 &       92.9 \\
\cmidrule{3-7}
&& FOA &         96.4 &           95.0 &         95.0 &       92.6 \\
&& FOP &          100.0 &          100.0 &       99.5 &       99.8 \\
\bottomrule
\end{tabular}

    \caption{Results for multi-cycle MSSP. The best results for each (a) pure strategies and (b) Limited Assignment are marked in bold without considering the FOA/FOP baselines.}
    \label{tab:res_mcmssp}
\end{table}

The results for the MCMSSP experiments are shown in Table~\ref{tab:res_mcmssp}.
In the scenario with only a single agent, the Limited Assignment approach has no effect, because no compatibility between tasks and the single agent can be removed and it is not part of this approach to completely remove tasks from the optimization problem. 
Therefore, the results for the scenarios with a single agent are identical to the previously discussed Pure Strategies.
Due to the single agent scenario, the number of full rotations is high and expresses how often each tasks has been assigned, as there is no actual rotation over different but just being assigned or not being assigned.
Still, there is a difference visible between the strategies in the diversity of the assignments and task selection.
For example, in the scenario with one agent and a task availability of 75\,\%, the PC strategy with 41 full rotations achieves 40\,\% more rotations compared to OS/10 with 28 full rotations, while maintaining a similar level of profit (99.9\,\% vs. 100\,\%).

In general, in the single agent scenarios, rotational diversity can be maintained with only 0.1\,\% reduction in profits by using the PC or WPP strategy.
Selecting the tasks to be assigned carefully is still important, as it is not possible to assign all available tasks per cycle. 
The utilization of the agent and the percentage of assigned tasks shows no major differences between the strategies.
In the scenario with 50\,\% available tasks, on average 90\,\% of these tasks are assigned, with an agent utilization of 90\,\%, including for the profit strategy. 
With 75\,\% available tasks, the utilization increases to 99\,\% and 65--67\,\% of the available tasks are assigned on average.

\subsubsection{Multiple Agent Scenarios}

When comparing the results for five agents and 20 tasks, the difference is in the task availability within each cycle, which is either 50 or 75\,\%.
Between the results of the pure strategies and Limited Assignment, we see a similar effect as before, such that Limited Assignment is effective to increase full rotations for most strategies with a profit trade-off.

The difference between scenarios with five agents but different task availability, i.e. 50\,\% respectively 75\,\%, shows as a higher task availability increases the possible assignments and thereby the profit gap between rotation-oriented strategies and the profit-only baseline FOP.
A low task availability restricts the possible assignments of all strategies and thereby leads to a smaller gap in profits. Still, employing a rotational diversity strategy allows to control this trade-off.

\subsection{Test Case Selection and Assignment} \label{sec:tcsa}

As a third and final case study, we employ the real-world application of Test Case Selection and Assignment (TCSA) for cyber-physical systems~\cite{Yoo2012},
such as industrial robots.
TCSA usually occurs in Continuous Integration (CI) processes,
where new releases of the robot control software are regularly integrated and released \cite{Mossige2017}.
Typically, CI involves assigning test cases to test agents several times a day.
Comprehensive test suites exist,
but available time and hardware for their execution are limited.
Then it is necessary to distribute a selection of the most relevant test cases over the available agents.
The test case relevance is given by an upstream test case prioritization
process \cite{Rothermel2001,Spieker2017b}.
This priority can be different at each cycle,
due to discovered failures or changes in the system-under-test.
The assignment of tests to agents is constrained by the available time and 
compatibility between test and agent.
In the \gap terminology, the test case priority resembles the profit,
the test's duration the task weight,
and an agent's available time its capacity.

Additionally,
the availability of agents is influenced by maintenance, technical faults,
or short-term usage in other projects,
and the set of test cases changes due to the ongoing development.
If TCSA were to be solved by a static assignment, this changing availability would create a need for frequent updates and schedule repairs.
Therefore, a static schedule is not practically applicable without additional effort.
Instead, to capture the dynamic setting,
an individual selection and assignment has to be made at each cycle.
Enforcing diverse assignments increases the coverage of tasks and agents, and
thereby the confidence into the system-under-test.

\subsubsection{Setup}
We evaluate the strategies on TCSA,
based on actual test data from our industrial partner.
A set of test cases is to be divided among a number of test agents.
The test case selection has to select those tests with the highest
priorities, which is assigned externally,
and to ensure a rotation of tests between agents,
such that a test is frequently executed on all compatible agents.
All test agents have the same capacity,
that is the time available for a test cycle, which is 10 hours.
Due to unique hardware specifications and different functionality,
a test case is compatible with approximately 60\,\% of the test agents.
The runtime of a test case varies from 1 to 21 minutes,
but is identical for each test agent.
In practice, test agents are not exclusively available for testing or might be defect and test cases are temporarily removed from the test suite.
Therefore,
an average of 40\,\% of the agents and 10\,\% of the test cases are unavailable for 3--7 cycles.
In total, we consider four scenarios,
20 agents with 750, 1500, and 3000 test cases,
and 30 agents with 3000 test cases.

\subsubsection{Pure Strategies}

\begin{table}[t]
    \centering
    \begin{tabularx}{0.98\textwidth}{lllRRRRR}
\toprule
\multicolumn{3}{r}{Number of Agents} & 20 & 20 & 20 & 30 & \\
\multicolumn{3}{r}{Number of Tasks} &  750 & 1500 & 3000 & 3000 & Average \\
\midrule
\multirow{16}{*}{\rotatebox[origin=c]{90}{(a) Pure Strategies}}
& \multirow{8}{*}{\rotatebox[origin=c]{90}{Rotational Diversity}} 
                  & OS/10 &  13 (22.0) &  6 (15.5) &  \textbf{3 (9.3)} &  \textbf{3 (8.4)} &  6 (13.8) \\
&& OS/20 &   8 (18.5) &  6 (15.2) &  3 (9.2) &  3 (8.3) &  5 (12.8) \\
&& OS/30 &   6 (17.0) &  5 (14.2) &  3 (9.0) &  3 (8.1) &  4 (12.1) \\
&& OS/40 &   6 (16.4) &  4 (13.1) &  3 (8.9) &  3 (7.9) &  4 (11.6) \\
&& PC      &  \textbf{15 (24.0)} &  \textbf{7 (14.3)} &  3 (8.2) &  3 (7.5) &  \textbf{7 (13.5)} \\
&& WPP   &  14 (24.0) &  7 (14.1) &  3 (7.3) &  3 (7.0) &  6 (13.1) \\
                  \cmidrule{3-8}
&& FOA &  15 (24.3) &  6 (15.6) &  3 (9.5) &  3 (8.5) &  6 (14.5) \\
&& FOP &   5 (15.9) &  0 (10.8) &  0 (7.0) &  0 (4.6) &   1 (9.6) \\
                  \cmidrule{2-8}
& \multirow{8}{*}{\rotatebox[origin=c]{90}{\shortstack[l]{Profit (\% of FOP)}}} 
    & OS/10 &         96.4 &        79.5 &       67.8 &       74.9 &        79.6 \\
&& OS/20 &         98.1 &        80.0 &       68.3 &       75.3 &        80.5 \\
&& OS/30 &         99.0 &        84.7 &       69.0 &       75.9 &        82.2 \\
&& OS/40 &         99.4 &        90.8 &       69.6 &       76.5 &        84.1 \\
&& PC       &         \textbf{99.7} &        \textbf{97.7} &       \textbf{91.1} &       \textbf{96.6} &        \textbf{96.3} \\
&& WPP    &         98.1 &        77.3 &       54.7 &       66.6 &        74.2 \\
                  \cmidrule{3-8}
&& FOA &         96.1 &          79.0 &       67.4 &       74.4 &        79.2 \\
&& FOP &         100.0 &         100.0 &        100.0 &        100.0 &         100.0 \\
\midrule
\midrule
\multirow{16}{*}{\rotatebox[origin=c]{90}{(b) Limited Assignment}}
    & \multirow{8}{*}{\rotatebox[origin=c]{90}{Rotational Diversity}} 
    & OS/10 &  14 (22.5) &  6 (15.5) &  3 (9.4) &  3 (8.4) &  6 (13.9) \\
&& OS/20 &  10 (20.4) &  6 (15.4) &  3 (9.3) &  3 (8.3) &  5 (13.4) \\
&& OS/30 &  11 (20.2) &  5 (14.8) &  3 (9.2) &  3 (8.2) &  5 (13.1) \\
&& OS/40 &  11 (20.3) &  4 (14.2) &  3 (9.1) &  3 (8.1) &  5 (12.9) \\
&& PC       &  14 (23.9) &  7 (14.4) &  3 (8.2) &  3 (7.5) &  6 (13.5) \\
&& WPP    &  \textbf{16 (24.1)} &  \textbf{8 (14.1)} &  \textbf{4 (7.5)} &  \textbf{4 (7.2)} &  \textbf{8 (13.2)} \\
\cmidrule{3-8}
&& FOA &  15 (24.3) &  6 (15.6) &  3 (9.5) &  3 (8.5) &  6 (14.5) \\                
&& FOP &   9 (20.2) &  2 (14.0) &  0 (9.3) &  0 (7.0) &  2 (12.6) \\                  
\cmidrule{2-8}
& \multirow{8}{*}{\rotatebox[origin=c]{90}{Profit (\% of FOP)}}
    & OS/10 &         96.7 &        79.6 &       67.8 &       74.9 &        79.8 \\
&& OS/20 &         99.8 &        80.1 &       68.4 &       75.4 &        80.9 \\
&& OS/30 &          \textbf{100.0} &        87.2 &       68.9 &         76.0 &          83.0 \\
&& OS/40 &          \textbf{100.0} &        94.9 &       69.6 &       76.5 &        85.3 \\
&& PC &         99.6 &        \textbf{97.7} &       \textbf{91.1} &       \textbf{96.6} &        \textbf{96.3} \\
&& WPP &         95.7 &        72.7 &       54.8 &       65.4 &        72.2 \\
\cmidrule{3-8}
&& FOA &         96.1 &        79.1 &       67.3 &       74.4 &        79.2 \\
&& FOP &         100.0 &         100.0 &        100.0 &        100.0 &         100.0 \\
\bottomrule
\end{tabularx}

    \caption{Results for TCSA. The best results for each (a) pure strategies and (b) Limited Assignment are marked in bold without considering the FOA/FOP baselines.}
    \label{tab:res_tcsa}
\end{table}

For the pure strategies, without additional Limited Assignment, the results are shown in the upper half of Table~\ref{tab:res_tcsa}.
In the smallest scenario, a full rotation of all tests over all possible agents is achieved 14--15 times over 365 cycles, i.e. every 24--26 days. 
Here, each task is compatible to circa 12 agents (60\,\%), and 60\,\% of the agents are unavailable for multiple cycles. 
For the larger scenarios with the same number of agents, the number of full rotations reduces approximately linear, but not the number of average rotations per task. 
This shows, that some tests are not evenly rotated and hinder the completion of full rotations. 
With a larger number of agents, the average number of rotations per tasks drops, as there are more compatible agents and more cycles are necessary for a full rotation.

The profits earned from the assignments (see lower half of Table~\ref{tab:res_tcsa})
are close to the FOP baseline for all strategies in the smallest scenario, but
decrease with a higher number of tests, except for PC, which
is able to balance profit maximization and rotation better than
the other strategies and even outperforms FOA for complete
rotations. For PC, the profit trade-off is always less than $10\,\%$, and
on average less than $4\,\%$ in comparison to the profit-oriented FOP.
WPP, who showed compelling results in the previous experiment, achieves similar rotational diversity to PC, but is less effective in profit optimization.

\subsubsection{Limited Assignment}

The results for the strategies in combination with Limited Assignment are shown in the lower half of Table~\ref{tab:res_tcsa}.
As we observed in the experiments on MCMKP, adding Limited Assignment and thereby reducing the space of possible assignments between tasks and agents, increases the rotational diversity. This is also true for TCSA.
For most strategies, the number of full and average rotations is increased, with the exception of PC in the smallest scenario, where the number of rotations is decreased by a small amount.
Interestingly, we do not observe a substantial reduction in assigned profits, but all strategies can maintain similar levels of profits with and without Limited Assignment.

\subsubsection{Distribution of Results}

As for the MCMKP case study, we visualize the average results of each strategy in terms of rotations and relative profit in \figref{fig:tcsa_pareto}. 
For TCSA, the rotations are calculated as $\text{Full Rotations} + \text{Average Rotations}/100$ to account for the high number of average rotations in these scenarios.

\begin{figure}[t]
    \centering
    \includegraphics{figures/tcsa_pareto.pgf}
    \caption{TCSA: Distribution of the average results for each strategy without (marked with red $\circ$) and with (marked with black $x$) limited assignment. Rotations are calculated as $\text{Full Rotations} + \text{Average Rotations}/100$.}
    \label{fig:tcsa_pareto}
\end{figure}

The extreme points being WPP and FOP, as in the MCMKP case study, the different \OS are located closely to each other.
The strong performance of product combination (PC) is also visible in the figure, with pure PC even outperforming the combination of PC and Limited Assignment due to the higher number of full rotations.

\section{Conclusion}
\label{sec:conclusion}
Rotational diversity is the frequent assignment of a task to all its compatible agents over subsequent cycles.
We present a two-part model for its optimization in multi-cycle assignment problems with 
variable availability of tasks and agents:
1) an inner assignment problem,
to optimize the assignment from tasks to agents,
and 2) an outer problem,
to adjust the task values for the maximization objective of the inner problem.

Five strategies,
each having a different approach and trade-offs,
and approaches, one using only the strategies to introduce rotational diversity, the other also controlling the compatibility between tasks agents, are evaluated on three case studies.
Achieving rotational diversity is possible with a profit trade-off of only 4\,\% in the test case selection and assignment case study. 
Both the product combination of profits and affinities,
and the objective switch strategy, that focuses on either profit maximization or diverse assignments, efficiently achieve rotational diversity. 

For applications of this method, we encourage the reader to start from the
product combination strategy. 
It is straightforward to implement and does not require initial configuration, but it can still be adjusted if necessary.

The combination of profits and affinities into a single task value is efficient
for balancing profits and rotation.
This is especially the case in settings where an extended multi-objective
optimization model is not an alternative.
Splitting the problem and its responsibilities allows to use problem-specific,
single-objective solvers for the inner problem, or to use problems with
additional requirements, e.g. precedence constraints or task-dependencies.

In future work, we aim to apply our approach to other settings,
which are derived from the general assignment problem.
These settings can include scheduling problems with precedence constraints and task-dependencies.
Again, rotation of tasks should be achieved without adding substantial computational costs
for rotational diversity.

The affinity metric is also not restricted to
assignment problems with rotational diversity,
but can be transferred to other domains.
One related concept is the idea of persistence, although in
an opposite sense to rotational diversity.
There, a solution should remain stable, even under uncertainties.

\section*{Acknowledgements}
This work is supported by the Research Council of Norway (RCN) through the research-based innovation center Certus, under the SFI program.
The experiments were performed on the Abel Cluster, owned by the University of Oslo and Uninett/Sigma2, and operated by the Department for Research Computing at USIT, the University of Oslo IT-department.

\bibliographystyle{theapa}
\bibliography{library}

\end{document}